\def\eqref#1{equation~\ref{#1}}
\def\1{\bm{1}}
\DeclareMathAlphabet{\mathsfit}{\encodingdefault}{\sfdefault}{m}{sl}
\SetMathAlphabet{\mathsfit}{bold}{\encodingdefault}{\sfdefault}{bx}{n}
\theoremstyle{plain}
\newtheorem{theorem}{Theorem}[section]
\newtheorem{proposition}[theorem]{Proposition}
\newtheorem{lemma}[theorem]{Lemma}
\theoremstyle{definition}
\newtheorem{definition}[theorem]{Definition}
\theoremstyle{remark}
\newtheorem{remark}[theorem]{Remark}
\title{Detecting Out-of-Distribution Samples via Conditional Distribution Entropy with Optimal Transport}
\author{Chuanwen Feng, Wenlong Chen, Ao Ke, Yilong Ren, Xike Xie, S.Kevin Zhou \\
School of Biomedical Engineering, University of Science and Technology of China \\
Data Darkness Lab, MIRACLE Center, Suzhou Institute for Advanced Research, USTC \\
\texttt{\{chuanwen,wlchency,sa21225249,ylren\}@mail.ustc.edu.cn} \\
\texttt{\{xkxie\}@ustc.edu.cn} \\
\texttt{\{s.kevin.zhou\}@gmail.com}
}
\begin{document}

\maketitle

\begin{abstract}
    \label{abs}
    When deploying a trained machine learning model in the real world, it is inevitable to receive inputs from out-of-distribution (OOD) sources. For instance, in continual learning settings, it is common to encounter OOD samples due to the non-stationarity of a domain. More generally, when we have access to a set of test inputs, the existing rich line of OOD detection solutions, especially the recent promise of distance-based methods, falls short in effectively utilizing the distribution information from training samples and test inputs. In this paper, we argue that empirical probability distributions that incorporate geometric information from both training samples and test inputs can be highly beneficial for OOD detection in the presence of  test inputs available. To address this, we propose to model OOD detection as a discrete optimal transport problem. Within the framework of optimal transport, we propose a novel score function known as the \emph{conditional distribution entropy} to quantify the uncertainty of a test input being an OOD sample. Our proposal inherits the merits of certain distance-based methods while eliminating the reliance on distribution assumptions, a-prior knowledge, and specific training mechanisms. Extensive experiments conducted on benchmark datasets demonstrate that our method outperforms its competitors in OOD detection.
\end{abstract}

\section{Introduction}
\label{intro}

Training a machine learning model often assumes that the training samples and test inputs are
drawn from the same distribution. 
However, when deploying a trained model in the open-world,
out-of-distribution (OOD) inputs, which come from a different distribution of training samples,
i.e., in-distribution (ID) samples, are inevitable. 
For example, in continual learning settings, with the inherent non-stationarity of a domain, it's typical to observe samples in a test setting which are Out-Of-Distribution (OOD) w.r.t. the training set \citep{kl}.
The ignorance or overconfidence with OOD inputs
leads to unwanted model predictions, 
Therefore, a trustworthy machine learning
model should keep a sharp lookout for OOD and ID inputs.

A flurry of works \citep{baseline,gram,learnable,limits,energy,gap} has been proposed on OOD detection; recent research attention is drawn by distance-based OOD detection \citep{ssd,knn} for its promising performance.
Distance-based methods utilize some metrics \citep{jmlrdistance,metric} in the feature representation space for differentiating OOD inputs from ID samples, with the built-in assumption that an OOD sample stays relatively far from ID samples.
For example, some works~\citep{ssd,cider}
use statistical information (e.g., mean, variance) of in-distribution and calculate the Mahalanobis distance as the score function.
The performance depends on the fitness between the parameterized Gaussian distribution and real distribution of the training data~\citep{provable,iccs}.
However, the distributional assumption may not hold in some non-stationarity scenarios, such as continual learning or domain adaption, presenting a pitfall in outlier detecting and thus giving rise to the risk of inconsistent estimation.
To alleviate it, \citep{knn} propose to only use $k$-th nearest training sample of a test input 
as the score function for OOD detection. 
With simplicity and efficacy, 
the method roots in pair-wise distance comparison, 
leaving untapped potential for exploiting population-wise information, as exposed in continual learning settings, where a set of accessible test inputs is provided.
The above limitations motivate us to study the following question:

\begin{center}
{
\textit{~~~Can we leverage the empirical distributions of both training samples and test inputs \\ ~~~~with geometric information to discriminate out-of-distribution data?~~~}
}
\end{center}

In this paper, we claim that the empirical distributions incorporating geometric information are beneficial for OOD detection in the presence of a set of accessible test inputs.
This idea presents a significant departure from previous works in several key aspects.
First, the idea focus on analyzing distributional discrepancies while incorporating geometric structure information in the feature space, which distinguishes it from the pair-wise distance comparison and allows it to leverage population-wise information for improving the performance.
Second, the empirical distribution refers to the observed data distribution rather than the one conforming to a hypothesis, potentially eliminating the distribution assumption (e.g., multivariate Gaussian distribution assumption in Mahalanobis distance).
Third, through the consideration of multiple test inputs, the mutual benefits of test inputs in OOD detection can be fully exploited, which has good potentials in some scenarios, such as continual learning and domain adaption, where we have access to a set of test inputs or even the entire test data.

Following this line of thoughts, we hereby propose a novel OOD detection method based on optimal transport theory.
We construct empirical probability measures for training samples and test inputs, which lifts the Euclidean space of feature representations to a probability space.
There are two advantages of doing so: 1) the empirical probability measures utilize the distribution information without assumptions about the underlying distribution;
2) it enables the measurement of the discrepancy between probability measures, which captures the significant difference between their corresponding supports, representing (training and test) samples.
Combining pair- and population-wise information, optimal transport provides a geometric way to measure the discrepancy between empirical probability measures, making a basis for discriminating OOD samples.

Then, to measure the uncertainty of a test input being an OOD sample, we propose a novel score function, called as \emph{conditional distribution entropy}. 
The sensitivity of conditional distribution entropy in capturing OOD inputs is enabled by the paradigm of mass split under marginal constraints in discrete optimal transport,
where the mass of the OOD input transported to training samples is dominated by that of ID test inputs. In particular, an ID input with a certain conditional transport plan corresponds to a low conditional distribution entropy, while an OOD input with an uncertain conditional transport plan corresponds to a high conditional distribution entropy. 

We conduct extensive experiments on benchmark datasets to gain the insights into our proposals. The results show that our proposed method is superior to the baseline methods. In particular, on challenging tasks, such as CIFAR-100 vs. CIFAR-10, our method achieves $14.12\%$ higher AUROC and $11.12\%$ higher AUPR than the state-of-the-art solution KNN+.

\vspace*{-1.5pt}
\section{Preliminaries}
\label{pre}
\vspace{-3.5pt}
\subsection{Problem Definition}
\vspace{-2.5pt}
\begin{definition}[OOD Detection]
Let $\mathbb{X}$ be the sample space and $\mathbb{Y}=\lbrace1,2,...,K\rbrace$ be the label space.
Given a training dataset $\mathcal{D}^{in}_{tr}$, we assume data is sampled from the joint distribution $\mathcal{P}^{in}_{\mathbb{XY}}$ over the joint space $\mathbb{X}\times\mathbb{Y}$.
A trustworthy machine learning model is expected to not only accurately predict on known ID inputs, but also identify unknown OOD inputs~\citep{knn}.
In the open world, test inputs $\mathcal{D}_{te}$ consists of both ID and OOD data. Given a set of test inputs from $\mathcal{P}^{in}_{\mathbb{X}}\times\mathcal{P}^{ood}_{\mathbb{X}}$,
the goal of OOD detection is to identify whether an input $\mathbf{x}\in \mathbb{X}$ is from ID ($\mathcal{P}^{id}_{\mathbb{X}}$) or OOD ($\mathcal{P}^{ood}_{\mathbb{X}}$).
\end{definition}

\vspace{-1pt}
\subsection{Optimal Transport}
\label{subsec:otpre}
\vspace{-1.5pt}
The optimal transport~\citep{topicot} is to seek an optimal
transport plan between two probability measures at the minimal cost,
measured by a Wasserstein distance. The basic format is shown as follows.
More details can be found in~\citep{compot}.

\textbf{Wasserstein Distance.} Let $\mathcal{S}$ be a locally complete and separable metric space,
$\mathcal{P(S)}$ be a Borel probability measure set on $\mathcal{S}$.
For any $\mathcal{X,X'}\subset \mathcal{S}$, assuming probability measures $\mu\in \mathcal{P(X)}$ and $\nu\in \mathcal{P(X')}$,
the optimal transport defines a Wasserstein distance between $\mu$ and $\nu$, formulated as
\begin{equation}
\label{eqn:wasser}
W_p(\mu,\nu):=\biggl( \inf\limits_{\pi(\mu,\nu)} \int_{\mathcal{X}\times\mathcal{X}'} ||x-x'||^pd\pi(\mu,\nu)\biggl)^\frac{1}{p}  \\
\end{equation}
$p\geq 1$, where the $\pi(\mu,\nu)$ is the set of joint probability measures with marginals $\mu$ and $\nu$.

\textbf{Discrete Optimal Transport.}
Let $ \Delta_n=\{\bm{\alpha}  \in \mathbb{R}_+^n |\sum_{i=1}^n \alpha_i=1,\forall \alpha_i\geq 0\}$ be an $n$-dimensional probability simplex.
Consider two empirical probability measures $\mu=\sum_{i=1}^n \alpha_i \delta_{x_i}$ and $\nu=\sum_{j=1}^m \beta_j \delta_{x'_j}$,
defined on metric spaces $\mathcal{X}$ with support $\{x_i\}_{i=1}^n$ and $\mathcal{X'}$ with support $\{x'_j\}_{j=1}^m$, respectively.
Here, the weight vector $\bm{\alpha} =(\alpha_1,\alpha_2,...,\alpha_n)$ and $\bm{\beta} =(\beta_1,\beta_2,...,\beta_m)$ live in $\Delta_n$ and $\Delta_m$, respectively.
The $\delta$ stands for the Dirac unit mass function.
Then, given a transport cost $c:\mathcal{X}\times\mathcal{X'}\rightarrow \mathbb{R}_+$,
the discrete optimal transport between probability measures $\mu$ and $\nu$ can be formalized as:
\begin{equation}
  \label{eqn:disot}
W_p^p(\mu,\nu):=\min\limits_{\mathbf{P}\in \Pi(\mu,\nu)} \langle \mathbf{C},\mathbf{P}\rangle_F \ \ \
s.t.\ \mathbf{P}\mathbf{1}_m=\mu,\ \mathbf{P}^T\mathbf{1}_n=\nu
\end{equation}
where $\mathbf{C}\in \mathbb{R}^{n\times m}_+$ is the transport cost matrix, and element $\mathbf{c}_{ij}$ represents a unit transport cost from $x_i$ to $x'_j$.
The $\mathbf{P}\in \mathbb{R}^{n\times m}_+$ is the transport plan and $\mathbf{P}^T$ is the transpose of $\mathbf{P}$. The $\mathbf{1}$ denotes the all-ones vector.
All feasible transport plans constitute the transport polytope $\Pi(\mu,\nu)$.
The $\langle \mathbf{C},\mathbf{P}\rangle_F$ is the Frobenius inner product of matrices, which equals to $tr(\mathbf{C}^T\mathbf{P})$.

\section{Method}
\label{method}
{\bf Overview.}
In this section, we study the method for OOD detection based on optimal transport theory.
In Section~\ref{subsec:ot}, we first construct empirical probability measures for training samples and test inputs; then transform the problem of distributional discrepancy between probability measures as the problem of the entropic regularized optimal transport, yielding the optimal transport plan.
In Section~\ref{subsec:uncertain},
we study how to use the optimal transport plan for modeling the score function for OOD detection.
In Section~\ref{subsec:feature},
we investigate how to leverage supervised contrastive training~\citep{supcon} to extract compact feature representations. 
Lastly, we extend our proposal to the unsupervised setting, in presence of training data without labels, which shows the generality of our proposal.
The framework of our method is shown in Figure \ref{fig:overview} (see Appendix \ref{append:a}).

\subsection{Optimal Transport for OOD Detection} 
\label{subsec:ot}

\textbf{Feature Extraction.} Given a training dataset with $N$ samples $\mathcal{D}^{in}_{tr}=\{(\mathbf{x}_i,y_i)\}_{i=1}^N$, where $\mathbf{x}_i$ represents the $i$-th sample and $y_i$ denotes the corresponding label.
The functionality of the feature extraction can be represented as a function $f$: $\mathbb{X}\rightarrow\mathbb{V}$ that maps an input sample from the $n$-dimensional input space $\mathbb{X}\subseteq \mathbb{R}^n$ to a $d$-dimensional feature space $\mathbb{V}\subseteq \mathbb{R}^d$. In this way, we obtain a set of feature representations $\{f(\mathbf{x}_i)\}_{i=1}^N$ of the training samples, which are used for the subsequent tasks of OOD detection.\footnote{\label{ft:fe}We normalize the features to a hypersphere, where inner production or cosine distance between feature vectors are natural choices. Please refer to Section~\ref{subsec:feature} for more details.}

\begin{algorithm}[t]
    \caption{Conditional Distribution Entropy Score Function} 
    \label{alg:comp}
 \begin{algorithmic}
    \STATE {\bfseries Input:}  probability measure $\mu$ and $\nu$, cost matrix $\bf{C}$, regularization coefficient $\lambda$
    \STATE Initialize $ \mathbf{K} = exp({-\mathbf{C}/\lambda})$, $\mathbf{u}\in \mathbb{R}^N_+,\mathbf{v}\in \mathbb{R}^M_+$
    \WHILE{$(\mathbf{u}$,$\mathbf{v})$ not converged}
    \STATE $\mathbf{u}=\mu\oslash (\mathbf{Kv})$ \COMMENT{$\oslash$: point-wise division}
    \STATE $\mathbf{v}=\nu\oslash (\mathbf{K}^{T}\mathbf{u})$
    \ENDWHILE
    \STATE $\mathbf{P}^\star \leftarrow diag(\mathbf{u})\mathbf{K}diag(\mathbf{v})$
    \STATE Initialize Res with a null array
    \FOR{$i=1$ to $M$}
    \STATE Res $\leftarrow$ condEntropy($col_i(\mathbf{P})$) \COMMENT{append}
    \ENDFOR
 \end{algorithmic}
 \end{algorithm}

\textbf{Empirical Probability Measure.} After the features are extracted, 
to utilize empirical distributions,
we first construct a probability measure $\mathcal{P}$ over the low-dimensional feature space $\mathbb{V}$,
which potentially lifts the Euclidean feature space $\mathbb{V}$ to a probability space $\mathbb{P}=\lbrace\mathbb{V},\sigma,\mathcal{P}\rbrace$, where $\sigma$ denotes a well-defined algebra structure~\citep{tao}.
Let the cardinality of the training dataset $\mathcal{D}^{in}_{tr}$ and test inputs $\mathcal{D}_{te}$ be $N$ and $M$, respectively, we define the discrete empirical probability measures of $\mathcal{D}^{in}_{tr}$ and $\mathcal{D}_{te}$ as $\mu$ and $\nu$, respectively, which are formulated as:
\[\mathcal{\mu}=\sum_{i=1}^N \alpha_i \delta_{x_i}\ \ \ \ \mathcal{\nu}=\sum_{j=1}^M \beta_j \delta_{x'_j},\]
where $\delta$ is a Dirac unit mass function and $x_i$ (likewise for $x'_j$) denotes $i$-th feature representation $f(\mathbf{x}_i)$,
i.e., the support of probability measure $\mu$.
For simplicity, we use uniform weights for $\bm{\alpha}$ and $\bm{\beta}$.

\textbf{Entropic Regularized Optimal Transport.}
The constructed probability measures on the training samples and test inputs enables the measurement of the distributional discrepancy.
Then, the problem is how distance information can be encoded in association with the discrepancy of probability measures to get both merits.
As mentioned in Section~\ref{subsec:otpre}, optimal transport provides a geometric manner to compare probability measures, paving the road for measuring both pair- and population-wise information.
However, conventional optimal transport incurs cubic time complexity, which is prohibitive in real applications.
To tackle the challenge, we formulate the OOD detection problem as a discrete optimal transport problem with entropic regularization, denoted as,

\begin{equation}
\label{eqn:otmodel}
\mathcal{L}_{\lambda}(\mu,\nu,\mathbf{C})=\min\limits_{\mathbf{P}\in \Pi(\mu,\nu)} \langle \mathbf{C,P}\rangle_F - \lambda E(\mathbf{P})
\ s.t.\ \mathbf{P}\mathbf{1}_M=\mu,\ \mathbf{P}^T\mathbf{1}_N=\nu,\ \lambda\geq 0
\end{equation}

where $E=-\sum_{ij}p_{ij}log(p_{ij}-1)$ is the entropy function of transport plan formalized in Definition. \ref{def:jointentropy}, and $\mathbf{C}\in \mathbb{R}^{N\times M}$ is the matrix of pairwise cosine distances$^{\ref{ft:fe}}$ between the supports.
The element $p_{ij}$ of transport plan $\mathbf{P}\in \mathbb{R}^{N\times M}$ denotes the mass transported from probability measure $\mu_i$ to $\nu_j$.
By solving the problem above, we can obtain an optimal transport plan $\mathbf{P}^{\star}$, as described in Theorem \ref{thm:theorem3.2}. 

\begin{definition}
  \label{def:jointentropy}
    \textit{Suppose two discrete random variables $U\sim \mu$ and $V\sim \nu$, following $(U,V)\sim \pi(\mu,\nu)$, where $\pi(\mu,\nu)$ is the joint distribution with marginals $\mu$ and $\nu$.
    The joint entropy of random variables $U$ and $V$ is defined as:}
    \[\mathbb{H}(U,V)=-\sum_{i}\sum_{j} \pi_{ij}log(\pi_{ij})\]
\end{definition}
The above definition indicates that transport plan $P$ is essentially a joint distribution with marginals $\mu$ and $\nu$.

\begin{theorem} 
\label{thm:theorem3.2}
The problem $\mathcal{L}_{\lambda}(\mu,\nu,\mathbf{C})$ has a unique optimal solution.
\end{theorem}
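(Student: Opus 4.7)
The plan is to treat this as a standard strictly convex optimization problem over a compact convex set, and to obtain existence by Weierstrass and uniqueness from strict convexity of the entropy-regularized objective. I will implicitly work in the regime $\lambda > 0$, since for $\lambda = 0$ the problem reduces to ordinary linear-programming OT whose minimizer is generically non-unique; the theorem as stated should be read with this caveat.

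First I would show that the transport polytope $\Pi(\mu,\nu)=\{\mathbf{P}\in\mathbb{R}_+^{N\times M} : \mathbf{P}\mathbf{1}_M=\mu,\ \mathbf{P}^T\mathbf{1}_N=\nu\}$ is nonempty, convex, and compact. Non-emptiness is witnessed by the product coupling $\mathbf{P}_{ij}=\mu_i\nu_j$; convexity follows because $\Pi(\mu,\nu)$ is the intersection of the non-negative orthant with finitely many affine hyperplanes; compactness follows from closedness together with the entrywise bound $0 \le p_{ij} \le \min(\mu_i,\nu_j) \le 1$. Next, I would verify that the objective $F(\mathbf{P}) = \langle \mathbf{C},\mathbf{P}\rangle_F - \lambda E(\mathbf{P})$ is continuous on $\Pi(\mu,\nu)$ under the usual convention $0\log 0 = 0$. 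Existence of at least one minimizer then follows immediately from Weierstrass.

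For uniqueness, the key step is strict convexity of $F$ on $\Pi(\mu,\nu)$ when $\lambda>0$. The Frobenius inner product contributes a linear term, so all the convexity comes from $-\lambda E(\mathbf{P}) = \lambda \sum_{ij} p_{ij}(\log p_{ij}-1)$ (up to an additive constant from $\sum p_{ij} = 1$, which does not affect the argmin). Since $x\mapsto x\log x$ is strictly convex on $[0,\infty)$, its separable sum over the entries of $\mathbf{P}$ is strictly convex on $\mathbb{R}_+^{N\times M}$. Standard reasoning then completes the proof: if $\mathbf{P}_1\neq \mathbf{P}_2$ were two optimal plans, convexity of $\Pi(\mu,\nu)$ would give $\tfrac{1}{2}(\mathbf{P}_1+\mathbf{P}_2)\in\Pi(\mu,\nu)$, and strict convexity would yield $F(\tfrac{1}{2}(\mathbf{P}_1+\mathbf{P}_2))<\tfrac{1}{2}F(\mathbf{P}_1)+\tfrac{1}{2}F(\mathbf{P}_2)$, contradicting optimality.

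The only real subtlety — and the step I would treat most carefully — is the boundary behaviour of the entropy term. Strict convexity of $x\log x$ as a scalar function holds on all of $[0,\infty)$, but when invoking the strict Jensen inequality on the separable sum one should check that at least one coordinate of $\mathbf{P}_1$ and $\mathbf{P}_2$ differs, which is automatic from $\mathbf{P}_1\neq \mathbf{P}_2$. Continuity at boundary points where $p_{ij}=0$ is also handled by the $0\log 0=0$ convention, since $x\log x\to 0$ as $x\to 0^+$. With these minor technicalities verified, existence and uniqueness follow in the clean convex-analytic way outlined above, justifying the Sinkhorn-type algorithm employed in Algorithm~\ref{alg:comp} to compute $\mathbf{P}^\star$.
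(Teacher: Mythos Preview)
Your proof is correct and follows essentially the same approach as the paper's sketch: both establish uniqueness from the (strict/strong) convexity of the negative-entropy term, which dominates the affine Frobenius term. The paper phrases this via the Hessian of $E(\mathbf{P})$ to claim $\lambda$-strong convexity, whereas you argue strict convexity of $x\log x$ coordinatewise and add the existence step via Weierstrass and the boundary convention $0\log 0=0$; these additions make your version more complete but do not change the underlying strategy.
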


\textit{Proof sketch.} The entropy function of transport plan $E(\mathbf{P})$ is a concave function, which can be evidenced by computing the Hessian matrix with regard to the transport plan.
Thus, the problem is $\lambda$-strongly convex w.r.t. the transport plan, and therefore has a unique optimal solution.

Theorem \ref{thm:theorem3.2} shows that solving the discrete entropic regularization optimal transport problem Equation~\ref{eqn:otmodel},
we can obtain a unique optimal transport plan $\mathbf{P}^\star$.
For more details, we refer to~\citep{boyd,bertsekas}.

\textbf{Entropic Regularized OT in the Dual.} The reason we study the dual problem of entropic regularized optimal transport is
the dual problem allows us to obtain an optimal transport plan via matrix scaling~\citep{sinkhorn} for the primal problem (Equation~\ref{eqn:otmodel}), which
is computationally efficient~\citep{cuturi}.
Moreover, the optimal transport plan can be used for modeling the conditional distribution entropy score function discussed in Section~\ref{subsec:uncertain}.

\begin{proposition}
\label{prop:3.3}
    The optimal transport plan of the dual problem has the form:
    $$\mathbf{P}_{ij}^\star=e^{(u_i-\mathbf{C}_{ij}+v_j)/\lambda}=\mathbf{u}_i\mathbf{K}_{ij}\mathbf{v}_j,$$
    where the introduced $\mathbf{u}$ and $\mathbf{v}$ are dual variables, and $\mathbf{K}=e^{-\mathbf{C}/\lambda}$.
\end{proposition}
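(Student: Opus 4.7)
The plan is to derive the stated form of $\mathbf{P}^\star$ via Lagrangian duality for the entropic OT program in Equation~\ref{eqn:otmodel}. By Theorem~\ref{thm:theorem3.2}, the primal is $\lambda$-strongly convex with a unique minimizer. Since Slater's condition is trivially satisfied (e.g., by the product measure $P = \mu\nu^T$, which is strictly positive and meets both marginal constraints), strong duality holds and the KKT conditions are both necessary and sufficient. The key observation is that the entropic term contains $\log p_{ij}$, which implicitly enforces positivity of every $p_{ij}$, so we do not need to carry an explicit non-negativity multiplier.

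I would introduce dual variables $u \in \mathbb{R}^N$ and $v \in \mathbb{R}^M$ for the row-sum and column-sum marginal constraints respectively, and write
\begin{equation*}
\mathcal{L}(\mathbf{P}, u, v) \;=\; \langle \mathbf{C},\mathbf{P}\rangle_F - \lambda E(\mathbf{P}) - u^T(\mathbf{P}\mathbf{1}_M - \mu) - v^T(\mathbf{P}^T\mathbf{1}_N - \nu).
\end{equation*}
Using $E(\mathbf{P}) = -\sum_{ij} p_{ij}(\log p_{ij} - 1)$ as in Definition~\ref{def:jointentropy} (so that $\partial E/\partial p_{ij} = -\log p_{ij}$), the stationarity condition $\partial \mathcal{L}/\partial p_{ij} = 0$ yields
\begin{equation*}
\mathbf{C}_{ij} + \lambda \log p_{ij} - u_i - v_j \;=\; 0,
\end{equation*}
which I would solve for $p_{ij}$ to obtain $p_{ij}^\star = \exp\bigl((u_i - \mathbf{C}_{ij} + v_j)/\lambda\bigr)$. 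Because this expression is automatically positive, the implicit non-negativity constraint is satisfied without further work.

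Finally, I would perform the change of variables $\mathbf{u}_i := e^{u_i/\lambda}$, $\mathbf{v}_j := e^{v_j/\lambda}$, and $\mathbf{K}_{ij} := e^{-\mathbf{C}_{ij}/\lambda}$, so that the optimizer factors as
\begin{equation*}
\mathbf{P}^\star_{ij} \;=\; \mathbf{u}_i\,\mathbf{K}_{ij}\,\mathbf{v}_j,
\end{equation*}
which is exactly the claimed form. The marginal constraints $\mathbf{P}\mathbf{1}_M=\mu$ and $\mathbf{P}^T\mathbf{1}_N=\nu$ then translate into the two fixed-point equations $\mathbf{u} = \mu \oslash (\mathbf{K}\mathbf{v})$ and $\mathbf{v} = \nu \oslash (\mathbf{K}^T\mathbf{u})$ that drive the Sinkhorn iterations used in Algorithm~\ref{alg:comp}.

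There is no real obstacle here: the derivation is a routine Lagrangian computation, and uniqueness is inherited from Theorem~\ref{thm:theorem3.2}. The only subtlety worth flagging in the write-up is the justification for omitting an explicit Lagrange multiplier for $\mathbf{P} \geq 0$, which follows from the domain of $\log$ in the entropic regularizer.
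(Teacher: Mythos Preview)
Your proposal is correct and follows essentially the same route as the paper: form the Lagrangian with dual variables $u,v$ for the two marginal constraints, set $\partial \mathcal{L}/\partial p_{ij}=0$ to get $\mathbf{C}_{ij}+\lambda\log p_{ij}-u_i-v_j=0$, and exponentiate with the substitutions $\mathbf{u}_i=e^{u_i/\lambda}$, $\mathbf{v}_j=e^{v_j/\lambda}$, $\mathbf{K}=e^{-\mathbf{C}/\lambda}$. Your added remarks on Slater's condition, the implicit positivity from the $\log$ term, and the resulting Sinkhorn fixed-point equations go slightly beyond what the paper spells out but are consistent with it.
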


    \textit{Proof sketch.} 
    With introducing Lagrangian associated to the primal problem (Equation~\ref{eqn:otmodel}), we can transform the primal problem with marginal constraints into an unconstrained optimization problem regarding the transport plan and dual variables. Then, taking derivatives on the transport plan leads to the above result.

Propostion~\ref{prop:3.3} indicates that the optimal transport plan has the matrix multiplication form and satisfies marginal constraints.
Thus, it can be regarded as a matrix scaling problem and can be solved with Sinkhorn iteration~\citep{sinkhorn} in quadratic time complexity.

\subsection{Conditional Distribution Entropy Measures OOD Samples}
\label{subsec:uncertain}

In this section, we introduce a novel score function derived from the optimal transport plan to examine whether an input sample is OOD or not.
The transport plan is a joint probability distribution relevant to the discrepancy between probability measures.
With the inherent uncertainty of the transport plan, we employ entropy language to model a score function, referred to as conditional distribution entropy score function. 
Then, we discuss the relationship between entropic regularized optimal transport and the proposed conditional distribution entropy score function.
Lastly, we give some additional results in Section~\ref{append:a}.

\begin{definition}[Conditional Distribution]
  \label{def:condDis}
  \textit{For two discrete random variables $(U,V)\sim\pi(\mu,\nu)$, the conditional distribution of U given value $v$ is defined as,
  }
  \begin{equation}
    \label{eqn:condDis}
    \pi_{U|V}(u|v)=\frac{\pi_{UV}(u,v)}{\pi_V(v)} \  \  \ \  \forall u\in dom(U)
  \end{equation}

\end{definition}

\textbf{Uncertainty Modeling.} In Definition \ref{def:jointentropy}, it shows that the transport plan $\mathbf{P}$ can be viewed as a joint probability distribution in the form of a matrix.
Given a test input, there is a corresponding column in the transport plan, which is essentially a conditional probability distribution, as defined Definition \ref{def:condDis}.
Accordingly, the entropy of the conditional probability distribution indicates the level of uncertainty regarding a test input belonging to the OOD. To this end, formally, we define the transport that happens on the test input $v\in dom(V)$ as a random event, represented by a random variable $T$ that follows the conditional distribution, i.e., $T\sim \pi_{U|V}(u|v)$.
Then, the score function is denoted as the entropy of
conditional distribution $\mathbb{H}(U|V=v)$, as defined in Definition \ref{def:condDisEnt}.

\begin{wrapfigure}{r}{0.5\textwidth}

  \vspace*{-6mm}
  \begin{center}
    \includegraphics[width=0.45\textwidth]{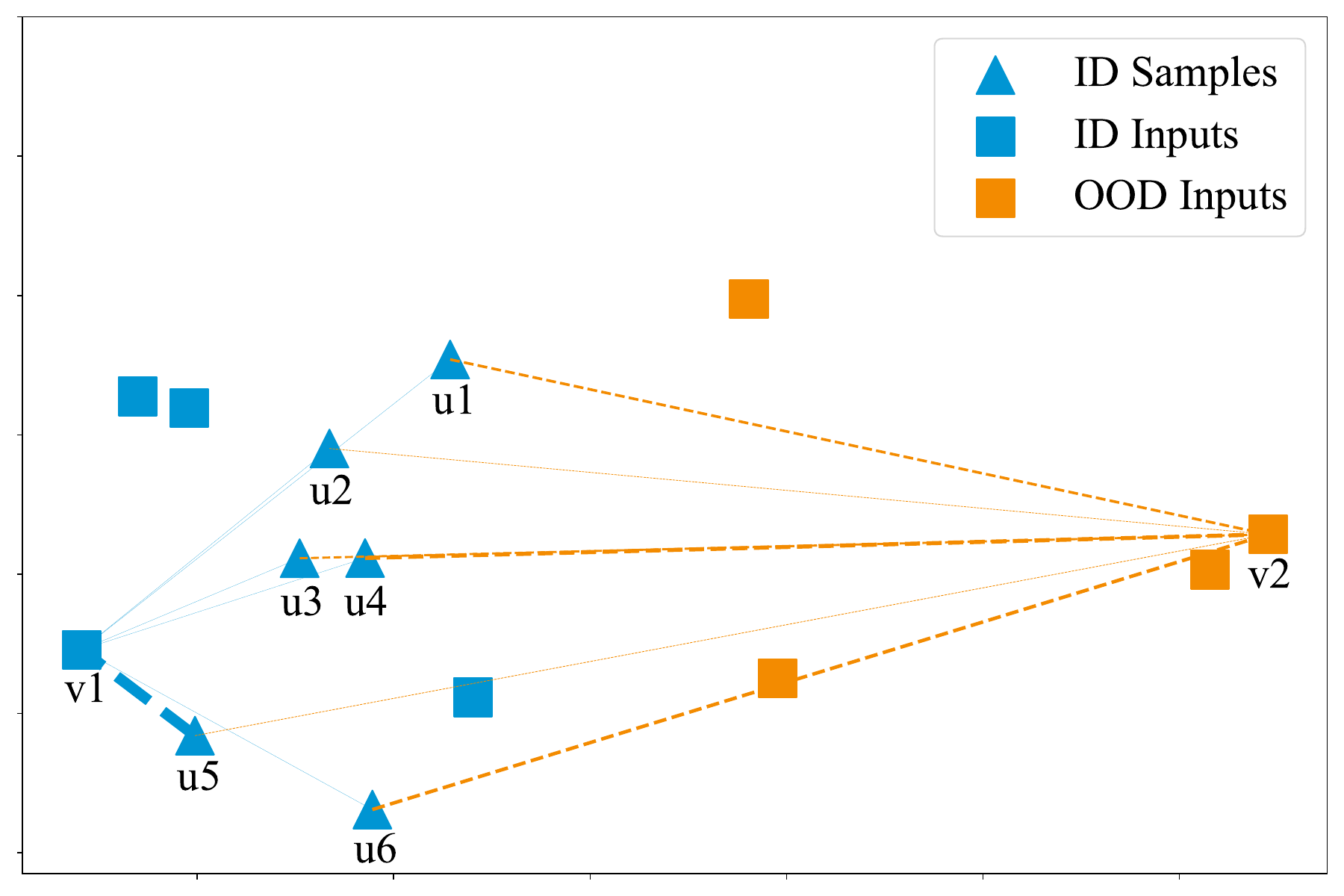}
  \end{center}
  \vspace*{-3mm}
  \caption{An example of transport plan.}
  \vspace*{-2mm}
  \label{fig}
\end{wrapfigure}

\begin{definition}[Conditional Distribution Entropy]
    \label{def:condDisEnt}
    \textit{For two discrete random variables $(U,V)\sim\pi(\mu,\nu)$, the entropy of conditional distribution of given value $v$
    is defined as,
    }
    \begin{equation}
      \label{eqn:score}
      \mathbb{H}(U|V=v)=-\sum_u^{dom(U)}\pi(u|v)log\pi(u|v)
    \end{equation}
\end{definition}

With the principle of maximum entropy, the conditional distribution entropy $\mathbb{H}(U|V=v)$ tends to bigger if the corresponding transport plan is more uniform.
In other words, it will be a higher chance of being an OOD sample if the given test input has more uncertainty.
On the contrary, the entropy value is smaller, if the corresponding transport plan is sparser.

Figure~\ref{fig} illustrates the uncertainty of transport plan. The training samples are $U=\{u_1\}_{i \leq 6}$ and test inputs are $\{v_j\}_{j \leq 2}$, where $v_1$ is an ID sample and $v_2$ is an OOD sample.
It can be observed that the transport from $v_2$ to $\{u_1\}_{i \leq 6}$ is almost uniform. In contrast, the transport from $v_1$ to $\{u_1\}_{i \leq 6}$ is sparse, since a large portion of the mass is transported from $v_1$ to $u_5$ and a small portion of the mass is transported from $v_1$ to $U-\{u_5\}$.
Thus, the uncertainty of the transport from $v_2$ to $U$ is high, whereas the uncertainty of the transport from $v_1$ to $U$ is low.

\begin{proposition}
  \label{prop:3.6}
  Let $(U,V)\sim \pi(\mu,\nu)$, when the regularization coefficient $\lambda$ converges to $+\infty$, the conditional entropy of given value $v$, $\mathbb{H}(U|V=v)$ converges to $log |dom(U)|$\footnote{Please refer to Appendix~\ref{append:a} for proof details.}. 
\[\mathbb{H}(U|V=v)\xrightarrow{\lambda\rightarrow{+\infty}} log |dom(U)|\]
\end{proposition}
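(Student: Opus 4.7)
The plan is to exploit the closed form of the optimal plan from Proposition~\ref{prop:3.3} and show that, in the large-$\lambda$ limit, the Sinkhorn solution collapses onto the independence coupling $\mu\otimes\nu$. The intuition is that after dividing the objective \eqref{eqn:otmodel} by $\lambda$, the transport-cost term vanishes and only the entropy term survives, so the entropic OT problem degenerates into maximum-entropy with marginal constraints; the unique maximum-entropy joint distribution with prescribed marginals is the product measure.

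First, I would rewrite Proposition~\ref{prop:3.3} as $\mathbf{P}^\star_{ij} = a_i\, e^{-\mathbf{C}_{ij}/\lambda}\, b_j$ with $a_i=e^{u_i/\lambda}$, $b_j=e^{v_j/\lambda}$. As $\lambda\to\infty$, $e^{-\mathbf{C}_{ij}/\lambda}\to 1$ uniformly in $(i,j)$ since $\mathbf{C}$ is bounded (cosine distances lie in a compact set). Hence any limit point of $\mathbf{P}^\star$ must have rank one, i.e.\ factorize as $\mathbf{P}^\infty_{ij}=\tilde a_i\tilde b_j$. Imposing the marginal constraints $\mathbf{P}^\infty\mathbf{1}_M=\mu$ and $(\mathbf{P}^\infty)^T\mathbf{1}_N=\nu$ on a rank-one matrix uniquely pins down $\tilde a_i=\mu_i$ and $\tilde b_j=\nu_j$ (up to the trivial rescaling, which is fixed by requiring both sums to equal one). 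Therefore $\mathbf{P}^\star\to\mu\otimes\nu$ entrywise.

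Next, I would plug the limit coupling into Definition~\ref{def:condDis}: for any $v\in\mathrm{dom}(V)$,
\begin{equation*}
\pi_{U\mid V}(u\mid v)\;=\;\frac{\mu(u)\nu(v)}{\nu(v)}\;=\;\mu(u).
\end{equation*}
Since the weights $\bm{\alpha}$ chosen in Section~\ref{subsec:ot} are uniform, $\mu(u)=1/N=1/|\mathrm{dom}(U)|$. Substituting into Definition~\ref{def:condDisEnt} gives
\begin{equation*}
\mathbb{H}(U\mid V=v)\;=\;-\sum_{u\in\mathrm{dom}(U)}\tfrac{1}{N}\log\tfrac{1}{N}\;=\;\log|\mathrm{dom}(U)|,
\end{equation*}
which is the desired limit. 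Continuity of the entropy functional on the probability simplex promotes the entrywise convergence $\mathbf{P}^\star\to\mu\otimes\nu$ to the stated convergence of $\mathbb{H}(U\mid V=v)$.

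The main technical obstacle, and the only nontrivial step, is justifying the first paragraph: showing that the dual potentials $u_i,v_j$ (which themselves depend on $\lambda$) do not blow up faster than $\lambda$ so that $a_i$ and $b_j$ have well-defined positive limits, and that the limit coupling really is rank-one and satisfies the marginal constraints. One clean way to handle this is to argue directly at the level of the objective: dividing \eqref{eqn:otmodel} by $\lambda$, the sequence of minimizers $\mathbf{P}^\star_\lambda$ is a minimizer of $\tfrac{1}{\lambda}\langle\mathbf{C},\mathbf{P}\rangle_F - E(\mathbf{P})$ on the compact transport polytope $\Pi(\mu,\nu)$, and a standard $\Gamma$-convergence / uniform-boundedness argument shows that any cluster point maximizes $E$ on $\Pi(\mu,\nu)$; strict concavity of $E$ (invoked already in Theorem~\ref{thm:theorem3.2}) then forces the unique limit to be $\mu\otimes\nu$.
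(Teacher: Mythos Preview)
Your proposal is correct and reaches the same two milestones as the paper: first $\mathbf{P}^\star\to\mu\otimes\nu$ as $\lambda\to\infty$, then the conditional-entropy computation under uniform $\mu$. The route to the first milestone, however, differs. Your primary argument works directly with the Sinkhorn closed form from Proposition~\ref{prop:3.3}: since $e^{-\mathbf{C}_{ij}/\lambda}\to 1$ uniformly, any limit is rank one, and the marginals pin it down as $\mu\otimes\nu$; you correctly flag the one delicate point (control of the $\lambda$-dependent potentials) and offer a compactness/$\Gamma$-convergence backup. The paper instead argues purely information-theoretically: it proves $I(U,V)\ge 0$ by Jensen, deduces $\mathbb{H}(U,V)\le\mathbb{H}(U)+\mathbb{H}(V)$ with equality only at the product coupling, and then observes that as $\lambda\to\infty$ the problem reduces to maximizing $\mathbb{H}(U,V)$ over $\Pi(\mu,\nu)$, forcing $\mathbf{P}\to\mu\otimes\nu$. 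Your explicit-kernel route is more constructive and makes the mechanism (flattening of $\mathbf{K}$) visible, at the cost of the potential-boundedness technicality; the paper's mutual-information route sidesteps that technicality entirely but is less transparent about \emph{why} the limit is rank one. Your $\Gamma$-convergence backup is essentially the paper's argument recast in convex-analytic rather than information-theoretic language (strict concavity of $E$ in place of the equality case of $I\ge 0$).
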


Above proposition reveals the inner relation 
between the entropic regularization OT and the conditional distribution entropy score function.
In other words, how the coefficient of entropic regularization $\lambda$ affects the performance of conditional entropic score for OOD detection.
As $\lambda$ approaches positive infinity,
the performance gradually decreases, where the conditional probability distribution degenerates to a maximum entropy distribution.

\subsection{Contrastive Training for Feature Representation}
\label{subsec:feature}

Our method is training-agnostic, it supports both supervised and unsupervised settings. Therefore, we present two kinds feature trainings used in our method, i.e. supervised contrastive training and self-supervised contrastive training (more details see Appendix \ref{append:a}). The procedures of our method, including feature extraction and OOD detection are shown in Algorithm \ref{alg:algo2}.

\textbf{Supervised Contrastive Loss.}
In this work, we employ supervised contrastive training \textit{SupCon}~\citep{supcon,csi} to obtain feature representations.
The idea of \textit{SupCon} is to pull together similar samples and push away dissimilar ones.

As shown in Figure \ref{fig:overview}, the supervised contrastive training consists of three major components, \textit{data augmentation}, \textit{encoder}, and \textit{projection head}. The mechanism of the supervised contrastive training works as follows.

For each input sample $(\mathbf{x}_i,y_i), i\in [1,N]$, a pair of augmented samples, i.e., $(\mathbf{x}_i^1,y_i)$ and $(\mathbf{x}_i^2,y_i)$, are generated by the \textit{data augmentation}.
Next, the pair of augmented samples are separately fed to the \textit{encoder} $f$, so that a pair of normalized representation vectors $(\mathbf{r}_i^1,\mathbf{r}_i^2)$ are generated.
The pair of representation vectors are then mapped into two low-dimensional normalized outputs $(\mathbf{z}_i^1,\mathbf{z}_i^2)$ by the \textit{projection head}.
At each iteration, we optimize the following loss function:
\[\mathcal{L}oss=\sum_{i=1}^{2N} -log \frac{1}{|\mathcal{I}(y_i)|}\sum_{k\in\mathcal{I}(y_i) }\frac{e^{\mathbf{z}_i^T \mathbf{z}_k/\tau}}{\sum_{j=1,j\neq i}^{2N} e^{\mathbf{z}_i^T \mathbf{z}_j/\tau}},\]
where $|\mathcal{I}(y_i)|$ is the cardinality of set $\mathcal{I}(y_i)$ representing the indices of all samples except $(x_i,y_i)$. $\tau$ is a scalar temperature parameter.

\section{Experiments}
\label{sec:exp}

In this section, we evaluate our proposed OOD detection method through
extensive empirical studies on multiple ID and OOD datasets.
Section \ref{subsec:setup} presents the experimental setup.
Section \ref{subsec:ret} reports the empirical studies, which demonstrate that our proposed method achieves state-of-the-art performance on multiple benchmark datasets.
Section \ref{subsec:ablation} conducts detailed ablation studies and analysis to offer insights into our proposals.

\subsection{Experimental Setup}
\label{subsec:setup}

\textbf{Datasets.} 
We evaluate our method on a series of benchmark datasets, including CIFAR-10~\citep{cifar}, CIFAR-100~\citep{cifar}, and SVHN~\citep{svhn}.
Besides, we conduct experiments on multiple real image datasets, including LSUN~\citep{lsunc}, Place365~\citep{place365}, Textures \citep{textures}, and tiny ImageNet \citep{Kingma2014}.

\textbf{Evaluation Metrics.} We use the following metrics to evaluate our method:
(1) the false positive rate (FPR95) of OOD samples, when the true positive rate of ID samples is at $95\%$;
(2) the area under the receiver operating characteristic curve (AUROC);
(3) the area under the precision recall curve (AUPR).

\textbf{Baselines.}
We consider a series of competitors, which can be classified into two categories.
The first category consists of models learned without contrastive training, including MSP~\citep{baseline}, ODIN~\citep{odin},
Energy~\citep{energy}, Mahalanobis~\citep{mas}, and GODIN~\citep{godin}.
The second category consists of the models learned with contrastive training, including Proxy Anchor~\citep{proxy}, CSI~\citep{csi}, CE+SimCLR~\citep{contrasforood}, and CIDER~\citep{cider}.
Moreover, for a fair comparison, we reproduce two state-of-the-art methods, i.e., SSD and KNN, with ResNet-18 network model following the parameters in \citep{ssd, knn}.

\textbf{Implementation Details.}
We use the same network configurations across all trainings. The simple ResNet-18~\citep{resnet18} architecture was employed as the backbone, which is trained using
SGD optimizer with the following settings:
a momentum of $0.9$, a weight decay of $10^{-4}$, and a batch size of $512$.
The learning rate is initialized as 0.5 with cosine annealing.
The temperature $\tau$ is $0.1$. The dimension of the encoder output is 512, and the dimensionality of the projection head is $128$.
The network is trained for $500$ epochs.
We set entropic regularization coefficient $\lambda$ as $0.5$ and $0.1$ for SimCLR and SupCon, respectively.

\subsection{Results}
\label{subsec:ret}

{\bf Performance Comparison.} We compare the performance of our method with baseline methods, in Table \ref{supcon-table}. We use CIFAR-100 as ID for training, and the mix of CIFAR-100 testing and a series of other datasets, including SVHN, LSUN, Textures, and Places365, for test inputs. 
For distance-based method and our method, a light-weighted network backbone, i.e., ResNet18, is used.
We can draw two major observations from the results. First, the performance of our proposed method dominates all its competitors.
For example, on LSUN, the FPR of our method is merely $21.4\%$ of the second-best method SSD+; on Places365, the AUROC of our method is $10.1\%$ higher than that of Proxy Anchor.
Second, the technique of contrastive training helps in improving the performance of OOD detection for a large portion of datasets.
For example, on LSUN, the FPR of our method is improved by over $12\%$ by incorporating the contrastive training. Similar observations are drawn on other methods.

\begin{table}[t]
	\vspace{-11pt}
	\caption{Comparison with state-of-the-art methods (including distance-based and non-distance-based methods) on {\bf CIFAR-100 as ID}, and four other datasets as OOD. The best results are in bold. 
The data of methods are copied from \citep{cider}. }
	\label{supcon-table}
	\vspace{5.5pt}
	\centering
	\resizebox{\linewidth}{!}{
	\begin{tabular}{lcccccccccc}
		\toprule
		\multirow{2}{*}{\textbf{Method}} &\multicolumn{2}{c}{\textbf{SVHN}} &\multicolumn{2}{c}{\textbf{iSUN}} &\multicolumn{2}{c}{\textbf{LSUN}} &\multicolumn{2}{c}{\textbf{Textures}} &\multicolumn{2}{c}{\textbf{Places365}}\\
		
		\multicolumn{1}{c}{} &\multicolumn{1}{c}{\textbf{FPR↓}} &\multicolumn{1}{c}{\textbf{AUROC↑}} &\multicolumn{1}{c}{\textbf{FPR↓}} &\multicolumn{1}{c}{\textbf{AUROC↑}} &\multicolumn{1}{c}{\textbf{FPR↓}} &\multicolumn{1}{c}{\textbf{AUROC↑}} &\multicolumn{1}{c}{\textbf{FPR↓}} &\multicolumn{1}{c}{\textbf{AUROC↑}} &\multicolumn{1}{c}{\textbf{FPR↓}} &\multicolumn{1}{c}{\textbf{AUROC↑}}\\
		
		\midrule
		
		&\multicolumn{10}{c}{\textbf{Without Contrastive Training}} \\
		
		\multicolumn{1}{l}{MSP\citep{baseline}} &\multicolumn{1}{c}{78.89} &\multicolumn{1}{c}{79.80} &\multicolumn{1}{c}{84.61} &\multicolumn{1}{c}{76.51} &\multicolumn{1}{c}{83.47} &\multicolumn{1}{c}{75.28} &\multicolumn{1}{c}{86.51} &\multicolumn{1}{c}{72.53} &\multicolumn{1}{c}{84.38} &\multicolumn{1}{c}{74.21}  \\

		\multicolumn{1}{l}{ODIN\citep{odin}} &\multicolumn{1}{c}{70.16} &\multicolumn{1}{c}{84.88} &\multicolumn{1}{c}{79.54} &\multicolumn{1}{c}{79.16} &\multicolumn{1}{c}{76.36} &\multicolumn{1}{c}{80.10} &\multicolumn{1}{c}{85.28} &\multicolumn{1}{c}{75.23} &\multicolumn{1}{c}{82.16} &\multicolumn{1}{c}{75.19}  \\

		\multicolumn{1}{l}{Mahalanobis\citep{mas}} &\multicolumn{1}{c}{87.09} &\multicolumn{1}{c}{80.62} &\multicolumn{1}{c}{83.18} &\multicolumn{1}{c}{78.83} &\multicolumn{1}{c}{84.15} &\multicolumn{1}{c}{79.43} &\multicolumn{1}{c}{61.72} &\multicolumn{1}{c}{84.87} &\multicolumn{1}{c}{84.63} &\multicolumn{1}{c}{73.89}  \\

		\multicolumn{1}{l}{Energy\citep{energy}} &\multicolumn{1}{c}{66.91} &\multicolumn{1}{c}{85.25} &\multicolumn{1}{c}{66.52} &\multicolumn{1}{c}{84.49} &\multicolumn{1}{c}{59.77} &\multicolumn{1}{c}{86.69} &\multicolumn{1}{c}{79.01} &\multicolumn{1}{c}{79.96} &\multicolumn{1}{c}{81.41} &\multicolumn{1}{c}{76.37}  \\

		\multicolumn{1}{l}{GODIN\citep{godin}} &\multicolumn{1}{c}{74.64} &\multicolumn{1}{c}{84.03} &\multicolumn{1}{c}{94.25} &\multicolumn{1}{c}{65.26} &\multicolumn{1}{c}{93.33} &\multicolumn{1}{c}{67.22} &\multicolumn{1}{c}{86.52} &\multicolumn{1}{c}{69.39} &\multicolumn{1}{c}{89.13} &\multicolumn{1}{c}{68.96}  \\
							
		\multicolumn{1}{l}{SSD\citep{ssd}} &\multicolumn{1}{c}{70.18} &\multicolumn{1}{c}{80.19} &\multicolumn{1}{c}{83.07} &\multicolumn{1}{c}{68.89} &\multicolumn{1}{c}{81.12} &\multicolumn{1}{c}{73.22} &\multicolumn{1}{c}{59.30} &\multicolumn{1}{c}{82.91} &\multicolumn{1}{c}{89.34} &\multicolumn{1}{c}{64.82} \\
		
		\multicolumn{1}{l}{KNN\citep{knn}} &\multicolumn{1}{c}{60.97} &\multicolumn{1}{c}{84.20} &\multicolumn{1}{c}{71.87} &\multicolumn{1}{c}{81.90} &\multicolumn{1}{c}{71.40} &\multicolumn{1}{c}{78.85} &\multicolumn{1}{c}{70.30} &\multicolumn{1}{c}{81.32} &\multicolumn{1}{c}{78.95} &\multicolumn{1}{c}{76.89} \\

		\multicolumn{1}{l}{\textbf{Ours}} &\multicolumn{1}{c}{\textbf{2.42}} &\multicolumn{1}{c}{\textbf{99.49}} &\multicolumn{1}{c}{\textbf{11.69}} &\multicolumn{1}{c}{\textbf{97.36}} &\multicolumn{1}{c}{\textbf{21.88}} &\multicolumn{1}{c}{\textbf{96.12}} &\multicolumn{1}{c}{\textbf{45.35}} &\multicolumn{1}{c}{\textbf{90.76}} &\multicolumn{1}{c}{\textbf{57.24}} &\multicolumn{1}{c}{\textbf{86.38}} \\
		\midrule
		
		&\multicolumn{10}{c}{\textbf{With Contrastive Training}} \\

		\multicolumn{1}{l}{Proxy Anchor\citep{proxy}} &\multicolumn{1}{c}{87.21} &\multicolumn{1}{c}{82.45} &\multicolumn{1}{c}{70.01} &\multicolumn{1}{c}{84.96} &\multicolumn{1}{c}{37.19} &\multicolumn{1}{c}{91.68} &\multicolumn{1}{c}{65.64} &\multicolumn{1}{c}{84.99} &\multicolumn{1}{c}{70.10} &\multicolumn{1}{c}{79.84} \\

		\multicolumn{1}{l}{CE+SimCLR} &\multicolumn{1}{c}{24.82} &\multicolumn{1}{c}{94.45} &\multicolumn{1}{c}{66.52} &\multicolumn{1}{c}{83.82} &\multicolumn{1}{c}{56.40} &\multicolumn{1}{c}{89.00} &\multicolumn{1}{c}{63.74} &\multicolumn{1}{c}{82.01} &\multicolumn{1}{c}{86.63} &\multicolumn{1}{c}{71.48} \\

		\multicolumn{1}{l}{CSI\citep{csi}} &\multicolumn{1}{c}{44.53} &\multicolumn{1}{c}{92.65} &\multicolumn{1}{c}{76.62} &\multicolumn{1}{c}{84.98} &\multicolumn{1}{c}{75.58} &\multicolumn{1}{c}{83.78} &\multicolumn{1}{c}{61.61} &\multicolumn{1}{c}{86.47} &\multicolumn{1}{c}{79.08} &\multicolumn{1}{c}{76.27} \\

		\multicolumn{1}{l}{CIDER($\alpha$ = 0.5)\citep{cider}} &\multicolumn{1}{c}{13.86} &\multicolumn{1}{c}{97.07} &\multicolumn{1}{c}{53.96} &\multicolumn{1}{c}{88.59} &\multicolumn{1}{c}{32.62} &\multicolumn{1}{c}{93.62} &\multicolumn{1}{c}{44.41} &\multicolumn{1}{c}{90.46} &\multicolumn{1}{c}{78.38} &\multicolumn{1}{c}{78.64} \\

		\multicolumn{1}{l}{SSD+\citep{ssd}} &\multicolumn{1}{c}{16.66} &\multicolumn{1}{c}{96.96} &\multicolumn{1}{c}{77.05} &\multicolumn{1}{c}{83.88} &\multicolumn{1}{c}{44.65} &\multicolumn{1}{c}{91.98} &\multicolumn{1}{c}{44.21} &\multicolumn{1}{c}{90.98} &\multicolumn{1}{c}{74.48} &\multicolumn{1}{c}{79.47}  \\

		\multicolumn{1}{l}{KNN+\citep{knn}} &\multicolumn{1}{c}{37.26} &\multicolumn{1}{c}{93.12} &\multicolumn{1}{c}{71.58} &\multicolumn{1}{c}{82.48} &\multicolumn{1}{c}{57.97} &\multicolumn{1}{c}{85.63} &\multicolumn{1}{c}{49.60} &\multicolumn{1}{c}{89.10} &\multicolumn{1}{c}{75.53} &\multicolumn{1}{c}{78.44} \\

		\multicolumn{1}{l}{\textbf{Ours}} &\multicolumn{1}{c}{\textbf{12.77}} &\multicolumn{1}{c}{\textbf{97.64}} &\multicolumn{1}{c}{\textbf{30.01}} &\multicolumn{1}{c}{\textbf{94.18}} &\multicolumn{1}{c}{\textbf{9.55}} &\multicolumn{1}{c}{\textbf{98.22}} &\multicolumn{1}{c}{\textbf{38.47}} &\multicolumn{1}{c}{\textbf{92.25}} &\multicolumn{1}{c}{\textbf{52.15}} &\multicolumn{1}{c}{\textbf{89.93}} \\
			
		\bottomrule
			
	\end{tabular}
	}
\end{table}

\begin{table}[h]
	\caption{Unsupervised OOD Detection in AUROC.}
	\label{tbl:unsupervised}
\scriptsize
	\centering
	\begin{tabular}{lccc}
		\toprule
			\multicolumn{1}{c}{\textbf{ID}} &\multicolumn{1}{c}{\textbf{CIFAR-10}} &\multicolumn{1}{c}{\textbf{CIFAR-10}} &\multicolumn{1}{c}{\textbf{CIFAR-100}}\\

			\multicolumn{1}{c}{\textbf{OOD}} &\multicolumn{1}{c}{\textbf{(CIFAR-100)}} &\multicolumn{1}{c}{\textbf{(SVHN)}} &\multicolumn{1}{c}{\textbf{(SVHN)}}\\

		\midrule			

			\multicolumn{1}{l}{Autoencoder\citep{ae}} &\multicolumn{1}{c}{51.3} &\multicolumn{1}{c}{2.5} &\multicolumn{1}{c}{3.0}\\

			\multicolumn{1}{l}{VAE\citep{vae}} &\multicolumn{1}{c}{52.8} &\multicolumn{1}{c}{2.4} &\multicolumn{1}{c}{2.6}\\

			\multicolumn{1}{l}{PixelCNN++\citep{pixelcnn}} &\multicolumn{1}{c}{52.4} &\multicolumn{1}{c}{15.8} &\multicolumn{1}{c}{-}\\

			\multicolumn{1}{l}{Deep-SVDD\citep{deepsvdd}} &\multicolumn{1}{c}{52.1} &\multicolumn{1}{c}{14.5} &\multicolumn{1}{c}{16.3}\\				
			
			\multicolumn{1}{l}{Rotation-loss\citep{rotateloss}} &\multicolumn{1}{c}{81.2} &\multicolumn{1}{c}{97.9} &\multicolumn{1}{c}{94.4}\\				
			
		\midrule
			\multicolumn{1}{l}{SSD\citep{ssd}} &\multicolumn{1}{c}{89.2} &\multicolumn{1}{c}{99.1} &\multicolumn{1}{c}{95.6}\\								

			\multicolumn{1}{l}{\textbf{Ours}} &\multicolumn{1}{c}{\textbf{90.0}} &\multicolumn{1}{c}{\bf 99.2} &\multicolumn{1}{c}{\textbf{98.4}}\\								
			
%
%
%
%
		\bottomrule
\end{tabular}
\end{table}

{\bf Unsupervised OOD Detection.} We show the result of unsupervised OOD detection in Table~\ref{tbl:unsupervised}. The datasets used for experiments are selected following the setting of \citep{ssd}.
It shows that our method outperforms all its competitors in unsupervised OOD detection.
For example, on CIFAR-100 vs. SVHN, the AUROC of our method is $2.8\%$ higher than that of SSD, the second-best method, showing the superiority and generality of our method.

\textbf{Performance on Hard Task.}

It is known that the OOD detection task is challenging if OOD samples are semantically similar to ID samples~\citep{contrasforood}.
We choose the task of CIFAR-100 vs. CIFAR 10, which is commonly adopted as hard task~\citep{cider} for the evaluation of OOD detection algorithms.

Then, we evaluate our method on the hard task by comparing with two state-of-the-art distance-based methods, i.e., SSD and KNN. The consistent dominance of our method is observed in all the three training mechanisms, in terms of all evaluation metrics.
It shows the superiority of our method in contending with hard task.

\begin{table}
	
\centering
	\caption{Hard OOD Detection Task.}
	\label{tb3:hardtask}
\scriptsize
	\begin{tabular}{lcccc}
		\toprule
			\multirow{1}{*}{\textbf{}} &\multirow{1}{*}{\textbf{Method}}
			&\textbf{AUROC↑} &\textbf{AUPR↑} &\textbf{FPR↓}\\
		\midrule			
			\multirow{2}{*}{SimCLR} &\multicolumn{1}{c}{SSD}&\multicolumn{1}{c}{67.25}  &\multicolumn{1}{c}{63.30}  &\multicolumn{1}{c}{88.72}\\

			&\multicolumn{1}{c}{Ours}&\multicolumn{1}{c}{\textbf{79.94}}  &\multicolumn{1}{c}{\textbf{74.81}} &\multicolumn{1}{c}{\textbf{79.47}}\\

		\midrule			
			\multirow{3}{*}{SupCE} &\multicolumn{1}{c}{KNN}&\multicolumn{1}{c}{76.94}  &\multicolumn{1}{c}{72.98}  &\multicolumn{1}{c}{81.78}\\

			&\multicolumn{1}{c}{SSD}&\multicolumn{1}{c}{61.89}  &\multicolumn{1}{c}{59.34}  &\multicolumn{1}{c}{91.00}\\
			
			&\multicolumn{1}{c}{Ours}&\multicolumn{1}{c}{\textbf{80.56}}  &\multicolumn{1}{c}{\textbf{75.67}} &\multicolumn{1}{c}{\textbf{79.65}}\\
		
		\midrule			
			\multirow{3}{*}{SupCon} &\multicolumn{1}{c}{KNN+}&\multicolumn{1}{c}{70.10}  &\multicolumn{1}{c}{67.83}  &\multicolumn{1}{c}{85.91}\\
			
			&\multicolumn{1}{c}{SSD+}&\multicolumn{1}{c}{68.03}  &\multicolumn{1}{c}{64.34}  &\multicolumn{1}{c}{87.88}\\
			
			&\multicolumn{1}{c}{Ours}&\multicolumn{1}{c}{\textbf{84.22}}  &\multicolumn{1}{c}{\textbf{78.95}} &\multicolumn{1}{c}{\textbf{77.00}}\\
			
		\bottomrule
	\end{tabular}
	\vskip -0.2in
\end{table}

\begin{figure*}[h]
	\vskip -0.1in
	\begin{center}
	\centerline{
		\includegraphics[width=0.45\textwidth]{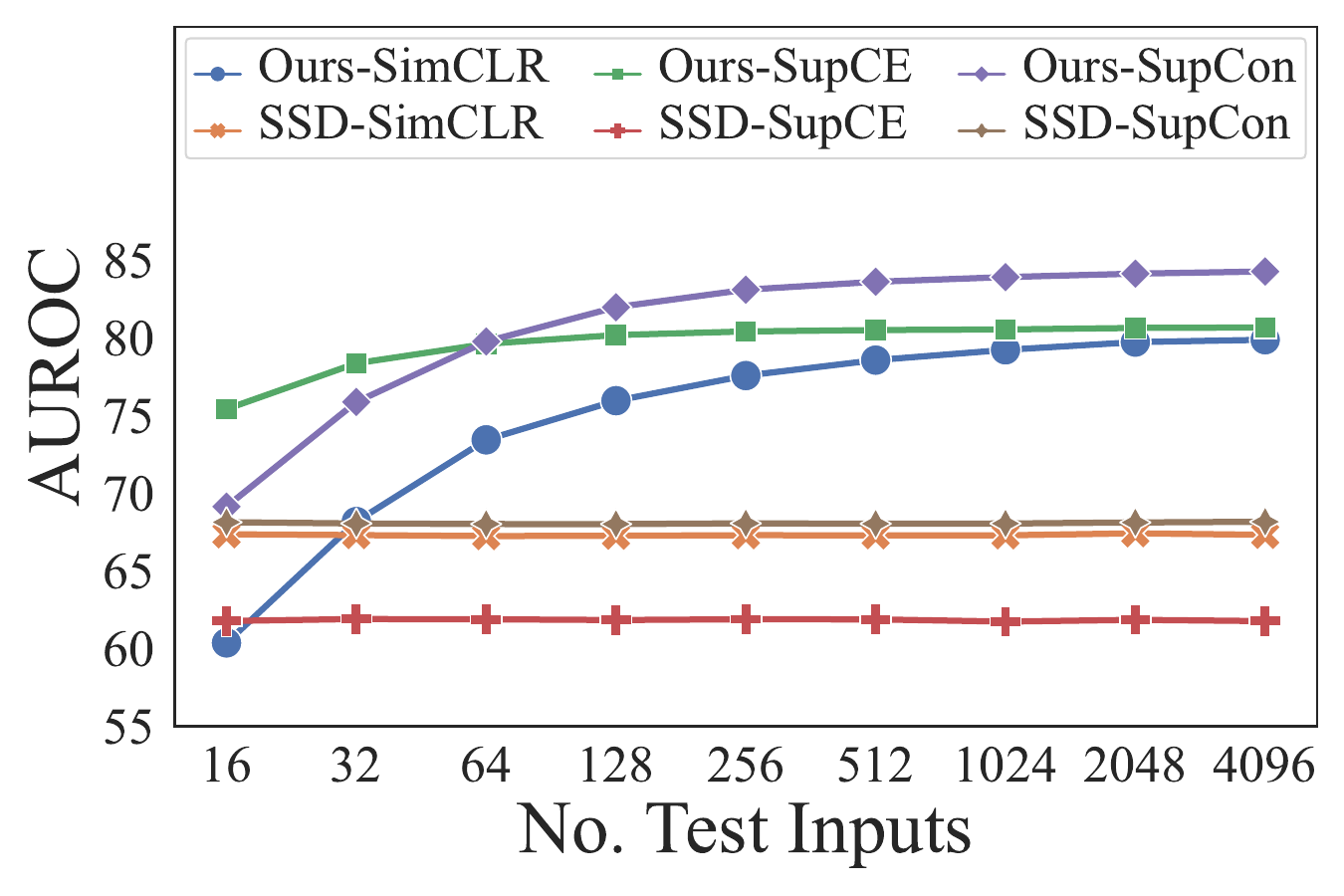}
	}
	\vskip -0.1in
	\caption{Performance with different number of test inputs on CIFAR-100 vs. CIFAR-10 (in AUROC). }
	\label{fig:no}
	\end{center}
	\vskip -0.3in
\end{figure*}
\textbf{Performance with different No.Test Inputs.}
Out of the consideration of simplicity, we assume the availability of the full test set, which may be a strict setting. Therefore, we divide the test set into a number of equally sized batches and provide the results under different number of test inputs, i.e. No.Test Inputs.
As shown in Figure~\ref{fig:no}, the performance of our method gradually increases and then converges.
However, the performance of SSD does not vary with the number of test inputs.
It is worth noting that our method outperforms SSD when the number of test samples is over 32 regardless of the training loss, which further demonstrates the superiority of our method.

\subsection{Ablation Studies}
\label{subsec:ablation}
In this section, we study the effect of different components of our proposed method for OOD detection performance.
For consistency, all ablation studies are based on the hard task CIFAR-100 vs. CIFAR-10 (in AUROC) under supervised contrastive training.

\textbf{Effect of temperature $\tau$.}
We study the effect of temperature for the performance of OOD detection.
We tune the value of temperature from $0.001$ to $0.5$ for SupCon and SimCLR, as shown in Figure~\ref{fig:abl} (a).
The results show that, with the increase of temperature, the AUROC tends to increase for the supervised contrastive training.
However, the temperature has less impact on performance of unsupervised contrastive training.

\textbf{Effect of Contrastive Training.} We study the effect of contrastive training on the performance under SupCE and SupCon in Figure~\ref{fig:abl} (b). The performance is examined by testing on three tasks, CIFAR-10 vs. CIFAR-100, CIFAR-100 vs. CIFAR-10, and CIFAR-10 vs. SVHN. The result shows that SupCon outperforms SupCE across all three datasets. To some extent, it shows that the contrastive training helps in improving the OOD detection performance.

\textbf{Effect of Training Epoches.} We examine the effect of epochs over the training methods in Figure~\ref{fig:abl} (c).
It shows that, as the increase of epochs, the AUROC of SupCE increases. In contrast, insignificant variation of the AUROC is observed for both SimCLR and SupCon. It implies that the epochs of training do not have a significant effect for methods based on contrastive training.
 \begin{figure*}[t]
 	\begin{center}
 	\centerline{
 		\includegraphics[width=1.0\linewidth]{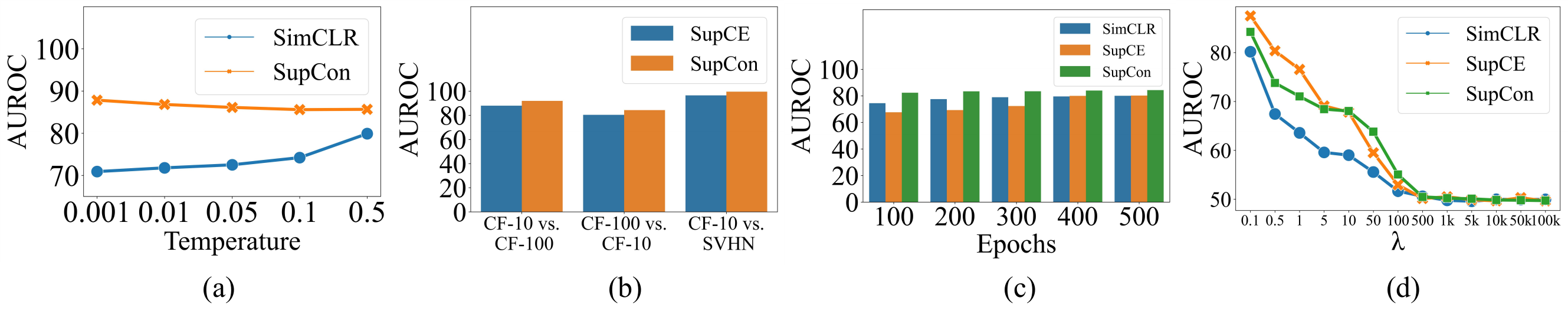}
 	}
 	\caption{
		Ablation studies on CIFAR-100 vs. CIFAR-10 (in AUROC).}
 	\label{fig:abl}
 	\end{center}
 	\vskip -0.4in
 	\end{figure*} 

\textbf{Effect of Regularization Coefficient $\lambda$.}
In Section \ref{subsec:uncertain}, we have theoretically analyzed how the entropic regularization coefficient $\lambda$ affects the score function.
Proposition~\ref{prop:3.3} shows that
the joint entropy of transport plan would increase until the convergence to a product measure, as the increase of the entropic regularization coefficient $\lambda$.
In this case, the entropic score value is a maximal entropy, which is equivalent to a random prediction.
Therefore, $\lambda$ should be initialized with a relative small value to obtain a good OOD detection performance.
As shown in Figure~\ref{fig:abl} (d), the trend w.r.t. the increase of $\lambda$ conforms to our theoretical analysis.
Also, the observation is consistent for all three training cases, i.e., the performance of OOD detection decreases as the increase of regularization coefficient $\lambda$.
In our implementation, we set the value of $\lambda$ as $0.1$ for deployment.

\begin{wraptable}{r}{0.58\textwidth}
	\vspace*{-11pt}
	\caption{Ablation study on optimal transport.}
	\vspace*{5.5pt}
	\label{tb4:ot}
\scriptsize
	
	\centering
	\begin{tabular}{cccccc}
		\toprule
			\multirow{3}{*}{\textbf{Dataset}} &\multirow{3}{*}{\textbf{Method}}  &\multicolumn{2}{c}{\textbf{SimCLR}} &\multicolumn{2}{c}{\textbf{SupCon}} \\
			
			&\textbf{} &\textbf{AUROC↑} &\textbf{FPR↓}   &\textbf{AUROC↑} &\textbf{FPR↓}\\			
		\midrule	
\multicolumn{1}{c}{CIFAR-10 vs.}			
&\multicolumn{1}{c}{w/o OT} &\multicolumn{1}{c}{70.70} &\multicolumn{1}{c}{90.48} &\multicolumn{1}{c}{65.28} &\multicolumn{1}{c}{97.51} \\
			
\multicolumn{1}{c}{CIFAR-100}			&\multicolumn{1}{c}{with OT} &\multicolumn{1}{c}{\textbf{89.95}} &\multicolumn{1}{c}{\textbf{51.72}} &\multicolumn{1}{c}{\textbf{91.90}} &\multicolumn{1}{c}{\textbf{43.43}} \\			

		\midrule	
\multicolumn{1}{c}{CIFAR-100 vs.}
&\multicolumn{1}{c}{w/o OT} &\multicolumn{1}{c}{61.96} &\multicolumn{1}{c}{91.55} &\multicolumn{1}{c}{54.35} &\multicolumn{1}{c}{94.55} \\
			
\multicolumn{1}{c}{CIFAR-10}	&\multicolumn{1}{c}{with OT} &\multicolumn{1}{c}{\textbf{79.94}} &\multicolumn{1}{c}{\textbf{79.47}} &\multicolumn{1}{c}{\textbf{84.22}} &\multicolumn{1}{c}{\textbf{77.00}} \\			
		
		\bottomrule
	\end{tabular}
\end{wraptable}

\textbf{Effect of Optimal Transport.}
We report the effect of optimal transport for OOD detection in Table~\ref{tb4:ot}. The performance is examined by using AUROC and FPR as evaluation metrics.
As shown in Table~\ref{tb4:ot}, optimal transport provides a consistent dominance on the two tasks, i.e., CIFAR-100 vs. CIFAR-10 and CIFAR-10 vs. CIFAR-100, and this fact is observed for both SupCon and SimCLR.
In particular, on the task of CIFAR-10 vs. CIFAR-100, using OT can improve the performance by up to $26.6\%$ on AUROC, and $54.1\%$ on FPR, for SupCon.
Even for the hard task of CIFAR-100 vs. CIFAR-10, detecting OOD samples with OT achieves remarkable performance, where the AUROC is steadily above $79\%$.
Optimal transport provides a geometric way to differentiate the discrepancy between empirical probability measures. 
As a result, our approach is essentially utilizing two types of information, i.e., distance information and distributional information. 
Thus, without optimal transport, our approach may degenerate to the kind of method that uses only distance information, such as KNN.

\section{Conclusion}
\label{sec:conclu}
In this paper, we propose to utilize the discrepancy of empirical distributions for enhancing the performance of OOD detection.
We apply discrete optimal transport with entropic regularization to measure the discrepancy between training samples and test inputs.
To measure the chance of a test input being an OOD sample, we present a novel conditional distribution entropy score function.
We offer theoretical justification and empirical verification to get insights into our proposals. 
Our method inherits the merit of distance-based method, i.e. parameter-free, training-agnostic, and prior-free. 
In particular, our method gives prominence in combining the pair- and population-wise information, and therefore offers significant improvement over state-of-the-art OOD detection methods.
Extensive experiments on benchmark datasets show the superiority of proposed method.

\bibliography{iclr2024_conference}
\bibliographystyle{iclr2024_conference}

\appendix
\section{Appendix}
\label{append:a}


\subsection{Proof of Proposition ~\ref{prop:3.3}}
\label{subsec:proofofprop3.3}

\begin{proof}
  Introducing Lagrangian associated to the primal problem, we have the following dual formulation,
\[\mathcal{L}_{\lambda}^D(\mathbf{P},\mathbf{u},\mathbf{v})=\langle \mathbf{C},\mathbf{P}\rangle-\lambda E(\mathbf{P})- \mathbf{u}^T(\mathbf{P}\mathbf{1}_M-\mu)-\mathbf{v}^T(\mathbf{P}^T\mathbf{1}_N-\mathbf{\nu})\]

  Taking partial derivative on transport plan yields,
  \[\frac{\partial\mathcal{L}^D_{\lambda}(\mathbf{P},\mathbf{u},\mathbf{v})}{\partial\mathbf{P}_{ij}}=\mathbf{C}_{ij}+\lambda \log{\mathbf{P}_{ij}}-u_i-v_j=0\]
  \[\mathbf{P}_{ij}^\star=e^{(u_i-\mathbf{C}_{ij}+v_j)/\lambda}=\mathbf{u}_i\mathbf{K}_{ij}\mathbf{v}_j\]
  where $\mathbf{u}_i=e^{u_i/\lambda}$, $\mathbf{v}_j=e^{v_j/\lambda}$, and $\mathbf{K}=e^{-\mathbf{C}/\lambda}$.
\end{proof}

\subsection{Additional Remarks}
\label{subsec:add-analy}

Here, we append the additional remarks for conditional distribution entropy with optimal transport, as mentioned in Section \ref{subsec:uncertain}.

The proposed conditional distribution entropy score function \ref{def:condDisEnt} is derived from the 
optimal transport plan, which is the optimal solution of solving entropic regularization OT (Equation \ref{eqn:otmodel}). As with the regularization item is some joint entropy of transport plan, thus, what is 
the relation of joint entropy of transport plan and the proposed conditional distribution entropy score function. We show the result in \textit{Remark} \ref{remark:rem6.2}.

\begin{definition}[Conditional Entropy]
  \label{def:condEnt}
  Given two random variable $(U,V)\sim\pi(\mu,\nu)$, the conditional entropy of $U$ given $V$ is defined as, 
  \[\mathbb{H}(U|V)=-\sum_{u}^{dom(U)} \sum_{v}^{dom(V)} \pi(u,v)\log \pi(u|v)\]
  \end{definition}

\begin{remark}[\textbf{Conditional Distribution Entropy and Joint Entropy in OT}] Given two random variable $(U,V)\sim\pi(\mu,\nu)$, following the Definition \ref{def:jointentropy} and \ref{def:condDisEnt}, the relation between conditional distribution entropy and joint entropy of transport plan is,
  \label{remark:rem6.2}
    \[\mathbb{H}(U,V)=\sum_v^{dom(V)} \pi(v) \mathbb{H}(U|V=v)+ \log dom(V)\]
\end{remark}

Above remark shows that we can optimize the joint entropy of transport plan to differentiate OOD samples, which benefits from the entropic regularization OT. The proof is given as follows,

\begin{proof}
Following the Definition \ref{def:condEnt}, we rewrite the conditional entropy into the form of conditional distribution entropy, 
\begin{eqnarray}
   \mathbb{H}(U|V) &=&-\sum_{u} \sum_{v} \pi(u,v)\log \pi(u|v) \nonumber \\ 
   &=&-\sum_{v}\pi(v) \sum_{u}\pi(u|v)\log \pi(u|v) \nonumber \\
   &=&\sum_v \pi(v) \mathbb{H}(U|V=v) \nonumber 
\end{eqnarray}
According to the relation between joint entropy and conditional entropy $\mathbb{H}(U,V)=\mathbb{H}(U|V)+\mathbb{H}(V)$, with the uniform weights, we have, 
\[\mathbb{H}(U,V)=\sum_v^{dom(V)} \pi(v) \mathbb{H}(U|V=v)+ \log dom(V)\]
\end{proof}

\textbf{Conditional Distribution Entropy with OT vs. SSD}
SSD \citep{ssd} proposes to leverage Mahalanobis distance, calculating the distance between a test input and the centroid of training distribution, as the score function to measure OOD sample, which implicitly incurs multivariate Gaussian distribution assumption\citep{knn}. 
To relax the distribution assumption and incorporating the more geometric information of feature space, our proposed method is instead to differentiate OOD samples by transporting empirical distribution of test inputs to empirical distribution of training data. In other words, it is to find the optimal mapping at a minimum cost between two empirical distributions \citep{topicot,compot}. 
Indeed, in a Gaussian context, Mahalanobis distance, formulated as, 
\[ \mathcal{M}(\mathbf{x}):= \min\ (\mathbf{x}-\bm \mu)\bm \Sigma^{-1}(\mathbf{x}-\bm \mu)\]
can be viewed some kind of optimal transport that transports data distributed on $\mathcal{N}(\bm \mu,\bm \Sigma)$ to a standard spherical normal\footnote{https://math.nyu.edu/~tabak/publications/M107495.pdf}.

\subsection{Proof of Proposition ~\ref{prop:3.6}}

\begin{definition}[\bf Mutual Information]
  \label{def:mutualinfo}
  Given two discrete random variable $U,V$, let $(U,V)\sim \pi(\mu,\nu)$, the mutual information $I(U,V)$ of which is expressed as,
  \[I(U,V)=\sum_{u,v}\pi(u,v)log\frac{\pi(u,v)}{\pi(u)\pi(v)}\]
\end{definition}

\begin{lemma}
  \label{lemma:mutual}
  The mutual information of two discrete random variables is non-negative, i.e., $I(U,V)\geq 0$.
  \begin{proof}
  To rewrite the formulation of mutual information as
  \[I(U,V)=-\sum_{u,v}\pi(u,v)log\frac{\pi(u)\pi(v)}{\pi(u,v)}\]
  Since the negative logarithm is convex and $\sum_{u,v}\pi(u,v)=1$, applying the \textit{Jensen Inequality}, we can have,
  \[I(U,V)\geq -log\sum_{u,v} \pi(u,v)\frac {\pi(u)\pi(v)}{\pi(u,v)}=0\]
  \end{proof}
\end{lemma}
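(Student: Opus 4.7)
The plan is a direct application of Jensen's inequality to the convex function $-\log$. First I would rewrite the mutual information by flipping the sign inside the logarithm:
$$I(U,V) = -\sum_{u,v} \pi(u,v)\,\log \frac{\pi(u)\pi(v)}{\pi(u,v)},$$
where the sum is understood over the support $\{(u,v) : \pi(u,v) > 0\}$, using the standard convention $0 \log 0 = 0$ for pairs outside the support. This rewriting expresses $I(U,V)$ as $\E_\pi[-\log X]$ with $X = \pi(U)\pi(V)/\pi(U,V)$ a well-defined positive random variable on the support of $\pi$.

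Next, since $-\log$ is convex on $(0,\infty)$ and $\pi$ is a probability mass function of total mass one, Jensen's inequality yields
$$I(U,V) \;=\; \E_\pi[-\log X] \;\geq\; -\log \E_\pi[X] \;=\; -\log \sum_{(u,v)} \pi(u,v)\cdot \frac{\pi(u)\pi(v)}{\pi(u,v)}.$$
On the support the product telescopes to $\pi(u)\pi(v)$, so the inner sum equals $\sum_{(u,v):\,\pi(u,v)>0} \pi(u)\pi(v) \leq \sum_{u,v} \pi(u)\pi(v) = \bigl(\sum_u \pi(u)\bigr)\bigl(\sum_v \pi(v)\bigr) = 1$. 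Monotonicity of $-\log$ then gives $I(U,V) \geq -\log 1 = 0$, which is the desired inequality.

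I do not expect any real obstacle here, since the argument is a single Jensen step combined with a trivial normalization. The only mild subtlety is the bookkeeping around pairs with $\pi(u,v) = 0$: these are excluded from the Jensen application so that $X$ is well-defined, and they are precisely what makes the last inequality $\sum_{\mathrm{supp}} \pi(u)\pi(v) \leq 1$ go in the right direction. One could also note in passing that equality throughout forces $X$ to be $\pi$-almost surely constant, i.e.\ $\pi(u,v) = \pi(u)\pi(v)$ (independence), though the lemma as stated does not require this characterization.
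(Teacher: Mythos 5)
Your proof is correct and follows essentially the same route as the paper's: rewrite $I(U,V)$ with the reciprocal inside the logarithm and apply Jensen's inequality to the convex function $-\log$. Your handling of pairs outside the support (so that the Jensen bound gives $\sum_{\mathrm{supp}}\pi(u)\pi(v)\leq 1$ rather than $=1$) is a minor refinement of the paper's argument, which implicitly assumes full support, but the substance is identical.
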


\begin{proof}
    According to the definition of joint entropy~\ref{def:jointentropy} and the definition of mutual information~\ref{def:mutualinfo}, the relation between mutual information and joint entropy reads,
    \[I(U,V)=\mathbb{H}(U)+\mathbb{H}(V)-\mathbb{H}(U,V)\]
    with lemma ~\ref{lemma:mutual}, we know the joint entropy is bounded as,
    \[\mathbb{H}(U,V)\leq \mathbb{H}(U)+\mathbb{H}(V)\]
    when $\lambda \rightarrow +\infty$, recall the entropic regularization problem $\mathcal{L}_{\lambda}(\mu,\nu,\mathbf{C})$,
    it is equivalent to maximize the joint entropy of transport plan, i.e., $\mathbb{H}(U,V)$, and thus,
    \[\mathbf{P}\xrightarrow{\lambda\rightarrow{+\infty}} \mu\otimes \nu\]
    where the $\otimes$ denotes product measure. In addition, we have uniform weight for marginal measure $\mu$ and $\nu$.
    Therefore, the entropy of conditional probability distribution in the limit,
    \[\mathbb{H}(U|V=v) \xrightarrow{\lambda\rightarrow{+\infty}} log(dom(U))\].
\end{proof}

\subsection{Performance on Large Semantic Space}
To evaluate the performance of our method on large semantic space, following the previous works, we choose tiny ImageNet as OOD data. It contains $100k$ data samples in $200$ classes. As shown in Table \ref{tbl:tiny}, our method outperforms the state-of-the-art distance-based solutions for all three metrics under two training paradigms. In particular, our method improve FPR up to $41\%$ than KNN, the second-best method. In a nutshell, the results show that our method can be  
adapted well to large-scale OOD detection task.

\begin{table*}[ht!]
	\caption{Performance comparison on large semantic space with state-of-the-art methods (CIFAR-100 vs. tiny ImageNet). The best results are in bold.}
	\label{tbl:tiny}
	\vskip 0.1in
	\scriptsize
	\centering
	\resizebox{0.5\linewidth}{!}{
    	\begin{tabular}{lccc}
    		\toprule
    		\multirow{2}{*}{\textbf{Method}} &\multicolumn{1}{c}{\textbf{AUROC}} &\multicolumn{1}{c}{\textbf{AUPR}} &\multicolumn{1}{c}{\textbf{FPR}}\\
			&\multicolumn{1}{c}{\textbf{↑}} &\multicolumn{1}{c}{\textbf{↑}} &\multicolumn{1}{c}{\textbf{↓}}\\
    		
    		\midrule
			\multicolumn{4}{c}{without contrastive learning}\\
			\multicolumn{1}{c}{SSD}  &\multicolumn{1}{c}{66.06} &\multicolumn{1}{c}{62.26} &\multicolumn{1}{c}{89.87}\\
			\multicolumn{1}{c}{KNN}  &\multicolumn{1}{c}{84.04} &\multicolumn{1}{c}{82.43} &\multicolumn{1}{c}{65.54}\\
			\multicolumn{1}{c}{Ours}  &\multicolumn{1}{c}{\textbf{94.71}} &\multicolumn{1}{c}{\textbf{95.01}} &\multicolumn{1}{c}{\textbf{24.42}}\\

    		\midrule
			\multicolumn{4}{c}{with contrastive learning}\\
			\multicolumn{1}{c}{SSD}  &\multicolumn{1}{c}{80.18} &\multicolumn{1}{c}{77.39} &\multicolumn{1}{c}{73.14}\\
			\multicolumn{1}{c}{KNN}  &\multicolumn{1}{c}{86.26} &\multicolumn{1}{c}{85.56} &\multicolumn{1}{c}{56.86}\\
			\multicolumn{1}{c}{Ours}  &\multicolumn{1}{c}{\textbf{88.34}} &\multicolumn{1}{c}{\textbf{87.79}} &\multicolumn{1}{c}{\textbf{51.16}}\\

    		\bottomrule
    			
    	\end{tabular}
	}
	\vskip -0.1in
\end{table*}

\subsection{Computational Efficiency}

\begin{table*}[ht!]
	\vskip -0.1in
	\caption{The efficiency comparison with state-of-the-art solution (in $s$).}
	\label{time-table}
	\vskip 0.1in
	\centering
	\resizebox{0.7\linewidth}{!}{
    	\begin{tabular}{lccc}
    		\toprule
    		\multirow{2}{*}{\textbf{Dataset}} &\multicolumn{3}{c}{\textbf{Method}} \\
    		\multicolumn{1}{c}{\textbf{}} &\multicolumn{1}{c}{KNN}  &\multicolumn{1}{c}{Ours with OT} &\multicolumn{1}{c}{Ours w/o OT}\\
    		
    		\midrule
    		\multicolumn{1}{c}{CIFAR-100 vs. CIFAR-10} &\multicolumn{1}{c}{66.54} &\multicolumn{1}{c}{64.10} &\multicolumn{1}{c}{37.23}\\

    		\multicolumn{1}{c}{CIFAR-100 vs. SVHN} &\multicolumn{1}{c}{114.85} &\multicolumn{1}{c}{106.48} &\multicolumn{1}{c}{49.02}\\

    		\bottomrule
    			
    	\end{tabular}
    }
	\vskip -0.1in
\end{table*}

We evaluate the computational efficiency of our method compared to KNN, which also incorporates geometric information of feature space, as shown in Table~\ref{time-table}. The reported time overhead is the total time required to detect all test inputs from CIFAR10 ($20k$ images) and SVHN (approximately $36k$ images). When comparing our methods with and without OT to KNN on CIFAR-100 vs. SVHN, we achieve $7.2\%$ and $57.3\%$ reduction in computational cost, respectively. On average, processing each test input takes only about $1$-$2$ milliseconds, which is reasonably fast.

\textbf{Self-supervised Contrastive Training.} What if training samples are without labels? In practice, it often requires tedious efforts for obtaining high-quality labels.
Existing works \citep{contrasforood, baseline} are mostly designed to supervised training\footnote{Some works \citep{poem, baseline_expo} further require auxiliary OOD data for fulfilling the OOD detection.},
it is also important to study unsupervised OOD detection.

In the unsupervised setting, the training set $\mathcal{D}_{tr}$ consists of data from the sample space $\mathbb{X}$, without any associated labels, while the test inputs $\mathcal{D}_{te}$ consists of data from $\mathcal{P}_{\mathbb{X}}^{in} \times \mathcal{P}_{\mathbb{X}}^{ood}$.
Hereby, we discuss how our proposal can be extended to support the unsupervised setting.
Our proposed method roots in the materialized feature space so that the self-supervised training mechanism can be plugged in to train an unsupervised feature extractor.
In our implementation, we use \textit{SimCLR}~\citep{simclr} for the training, which consists of the same set of components as \textit{SupCon}~\citep{supcon}.
Due to the absence of labels, it pulls together samples from the same augmentation and vice versa.
To do that, we optimize the following loss function,
\[\mathcal{L}oss=\sum_{i=1}^{2N}-log\frac{e^{\mathbf{z}_i^T\mathbf{z}_k/\tau}}{\sum_{j=1,j\neq i}^{2N}e^{\mathbf{z}_i^T\mathbf{z}_j/\tau}},\]

where the numerator compares samples from the same augmentation, yet the denominator compares current sample $\mathbf{z}_i$ with all other samples.

\subsection{The Computing Framework}
\label{app:algo}
\begin{figure*}[h]
 \begin{center}
   \includegraphics[width=0.85\linewidth]{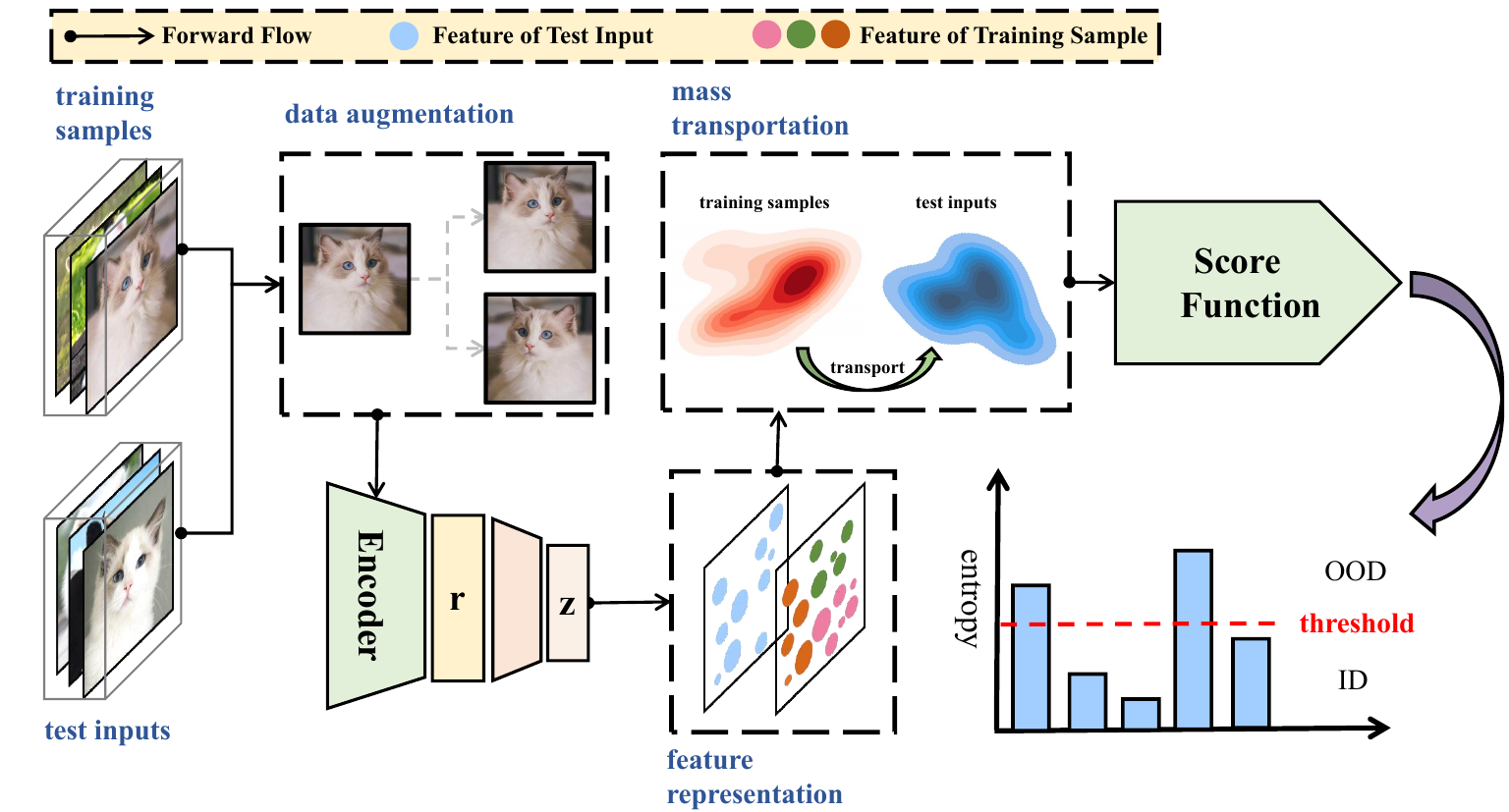}
 \end{center}
  \caption{An illustration of framework with OT. The shared encoder receives training samples with data augmentation and test inputs, which produce two kinds of feature representations, forming the corresponding distributions (red area and blue area). After the mass transportation, we can obtain the optimal transport plan, where the conditional distribution entropy of each test input can be derived as score function. A test sample can be identified as ID or OOD by comparing its score with threshold.
  }
 \label{fig:overview}
\end{figure*}

\subsection{The Procedures}
\begin{algorithm}[h!]
  \caption{Detecting OOD Samples via Conditional Transport Entropy with Optimal Transport} 
  \label{alg:algo2}
\begin{algorithmic}
  \STATE {\bfseries Input:}  $\mathcal{D}_{tr}$, $\mathcal{D}_{te}$, feature extractor $f$, projection head $h$, temperature $\tau$, $\lambda$
  
  \WHILE{training $f$ not converged}
  \IF{\textit{label is available}}
  \STATE $\mathcal{L}oss=\sum_{i=1}^{2N} -log \frac{1}{|\mathcal{I}(y_i)|}\sum_{k\in\mathcal{I}(y_i) }\frac{e^{\mathbf{z}_i^T \mathbf{z}_k/\tau}}{\sum_{j=1,j\neq i}^{2N} e^{\mathbf{z}_i^T \mathbf{z}_j/\tau}},$
  \ENDIF
  \STATE\ \ \  $\mathcal{L}oss=\sum_{i=1}^{2N}-log\frac{e^{\mathbf{z}_i^T\mathbf{z}_k/\tau}}{\sum_{j=1,j\neq i}^{2N}e^{\mathbf{z}_i^T\mathbf{z}_j/\tau}},$
  \ENDWHILE

  \FOR{$i=1$ to $M$}
  \STATE scores $\leftarrow$ computing score using Algorithm~\ref{alg:comp}
  \ENDFOR
\end{algorithmic}
\end{algorithm}

\end{document}